\documentclass[runningheads]{llncs}
\usepackage[T1]{fontenc}

\usepackage{amsmath,amssymb,amsfonts}
\usepackage{placeins}
\usepackage{graphicx}
\usepackage{textcomp}
\usepackage{xcolor}
\usepackage{newfloat}
\usepackage{listings}
\usepackage{booktabs}
\usepackage{enumitem}
\usepackage{cuted}

\providecommand{\Description}[1]{}
\providecommand{\citet}[1]{\cite{#1}}
\providecommand{\citep}[1]{\cite{#1}}

\def\BibTeX{{\rm B\kern-.05em{\sc i\kern-.025em b}\kern-.08em
    T\kern-.1667em\lower.7ex\hbox{E}\kern-.125emX}}
\usepackage[hidelinks]{hyperref}
\usepackage{xurl}
\begin{document}

\title{AGPO: Adaptive Group Policy Optimization with Dual Statistical Feedback}

% === author start ===
\newcommand{\orcidAuthorOne}{\href{https://orcid.org/0000-0002-6417-3280}{\protect\includegraphics[scale=0.045]{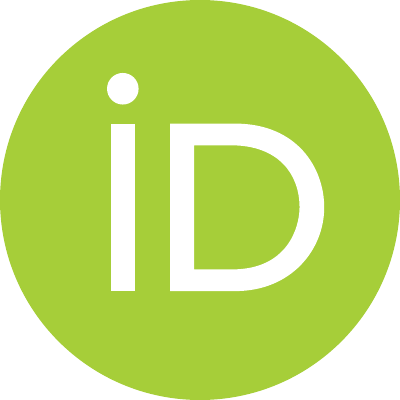}}}
\newcommand{\orcidAuthorTwo}{\href{https://orcid.org/0009-0007-5519-6222}{\protect\includegraphics[scale=0.045]{pic/orcid.pdf}}}
\newcommand{\orcidAuthorThree}{\href{https://orcid.org/0009-0008-0501-8652}{\protect\includegraphics[scale=0.045]{pic/orcid.pdf}}}
\newcommand{\orcidAuthorFour}{\href{https://orcid.org/0009-0000-9495-0389} 
{\protect\includegraphics[scale=0.045]{pic/orcid.pdf}}}
\newcommand{\orcidAuthorFive}{\href{https://orcid.org/0000-0002-1069-4830}{\protect\includegraphics[scale=0.045]{pic/orcid.pdf}}}
\newcommand{\orcidAuthorSix}{\href{https://orcid.org/0009-0007-2672-6650}{\protect\includegraphics[scale=0.045]{pic/orcid.pdf}}}
\newcommand{\orcidAuthorSeven}{\href{https://orcid.org/0009-0002-6042-3454}{\protect\includegraphics[scale=0.045]{pic/orcid.pdf}}}
\newcommand{\orcidAuthorEight}{\href{https://orcid.org/0000-0002-1799-3948}{\protect\includegraphics[scale=0.045]{pic/orcid.pdf}}}

\author{
Miaobo Hu\inst{1,2}\orcidAuthorOne \and
Bokun Wang\inst{1,2}\orcidAuthorTwo \and
Shuhao Hu\inst{1,2}\orcidAuthorThree \and
Ruohan Wang\inst{1,2}\orcidAuthorFour \and
Xin Wang\inst{1}\orcidAuthorFive \and
Xiaobo Guo\inst{1}\orcidAuthorSix \thanks{Corresponding author: guoxiaobo@iie.ac.cn} \and
Daren Zha\inst{1}\orcidAuthorSeven \and
Jun Xiao\inst{3}\orcidAuthorEight
}

\authorrunning{M. Hu et al.}

\institute{
Institute of Information Engineering, Chinese Academy of Sciences, Beijing 100092, China
\email{\{humiaobo,wangbokun,hushuhao,wangruohan\}@iie.ac.cn}
\email{\{wangxin,guoxiaobo,zhadaren\}@iie.ac.cn}
\and
School of Cyber Security, University of Chinese Academy of Sciences, Beijing 100049, China
\and
School of Artificial Intelligence, University of Chinese Academy of Sciences, Beijing 100049, China
\email{xiaojun@ucas.ac.cn}
}

% === author end ===

\maketitle

\begin{abstract}
Reinforcement learning improves LLM reasoning, but PPO/GRPO typically use fixed clipping and decoding temperature, which makes training brittle and tuning-heavy. We propose Adaptive Group Policy Optimization (AGPO), a critic-free refinement of GRPO that uses group-level statistics to control both update magnitude and exploration. AGPO uses a shared probe-derived statistical state to drive two controllers:
(i) adaptive clipping, which sets the trust-region size from reward
dispersion and skewness, probe vote entropy, policy entropy, and step-wise
KL drift; and (ii) bidirectional adaptive temperature sampling, which heats
or cools decoding around a base temperature according to centered uncertainty
relative to a running baseline. On nine English and Chinese math/STEM benchmarks, Qwen2.5-14B trained with AGPO outperforms PPO/GRPO under the same generated-token budget, reaching 67.3\% on GSM8K and 40.5\% on MATH. Gains transfer to Llama-3-8B and Gemma-2-9B, and ablations confirm both modules are complementary. 
Our implementation is publicly available at \url{https://github.com/wandugu/paper_agpo}.

\keywords{Reinforcement Learning \and Large Language Models \and Adaptive Clipping \and Temperature Sampling \and RLHF}
\end{abstract}

\section{Introduction}
\label{sec:intro}

Reinforcement learning (RL) improves LLM reasoning, but PPO and GRPO
usually keep the clipping radius $\varepsilon$ and decoding temperature
$\tau$ fixed \cite{schulman_proximal_2017,ouyang_training_2022,shao_deepseekmath_2024}.
In practice, reward dispersion, answer disagreement, policy entropy,
and KL drift vary substantially during training, so a single fixed
setting can under-update early and destabilize late-stage optimization.

This creates three recurring issues in RLHF: heavy tuning of trust-region
and decoding hyper-parameters, exploration--exploitation mismatch, and
brittle convergence. AGPO addresses them with observable group-level
statistics that control both update magnitude and exploration.

We propose \textbf{Adaptive Group Policy Optimization (AGPO)}, a
light-weight critic-free refinement of GRPO that uses group-level
statistics to adapt (i) the clipping radius and (ii) the sampling
temperature. As in GRPO, a \emph{group} denotes the $G$ rollouts
sampled for one prompt from a single policy.

\paragraph{Contributions.}
\begin{enumerate}[leftmargin=1.2em,itemsep=2pt]
  \item \textbf{Adaptive clipping.}
  We replace fixed GRPO clipping with a statistics-aware
  $\varepsilon_{\text{adaptive}}$ driven by reward dispersion/skewness,
  probe disagreement, policy entropy, and step-wise KL drift.

  \item \textbf{Shared-probe dual control.}
  One probe phase drives both adaptive clipping and Adaptive Temperature
  Sampling (ATS), improving the exploration/stability trade-off without
  a value critic.

  \item \textbf{Consistent gains.}
  On nine English and Chinese math/STEM benchmarks, AGPO improves over
  PPO/GRPO under equal generated-token budgets, and the gains transfer
  to Llama-3-8B and Gemma-2-9B.
\end{enumerate}

\section{Related Work}
\label{sec:related}

\paragraph{Critic-based vs.\ critic-free RLHF.}
PPO-style RLHF couples policy optimization with a learned value critic
\cite{schulman_proximal_2017,ouyang_training_2022}. Critic-free
alternatives such as DPO \cite{rafailov_direct_2024} and GRPO
\cite{shao_deepseekmath_2024} simplify training and reduce memory, but
typically keep clipping and sampling hyper-parameters fixed.

\paragraph{Adaptive strategies for LLM RL.}
Adaptive KL control has been used in LLM alignment
\cite{ouyang_training_2022}. AGPO differs by jointly adapting clipping
and sampling inside a critic-free loop from a shared group-level probe
state, targeting reasoning benchmarks where exploration and stability
both matter \cite{hendrycks_measuring_2021}.

\section{Method}
\label{sec:method}
\paragraph{Notation.}
We denote token-level Shannon entropy by $H(\pi)$ and the training horizon by $T$.
We use $D_{\mathrm{KL}}(\pi_{\theta_{\mathrm{old}}}\Vert\pi_{\theta_{\mathrm{new}}})$ as a step-wise drift signal and
$D_{\mathrm{KL}}(\pi_{\theta}\Vert\pi_{\mathrm{ref}})$ as a reference-anchoring regularizer.
For a prompt $x$, grouped rewards are $\{r_i\}_{i=1}^{G}$ with mean $\mu_r$ and dispersion estimator $\hat{\sigma}$;
temperatures and clip radii are bounded in $[\tau_{\min},\tau_{\max}]$ and $[\varepsilon_{\min},\varepsilon_{\max}]$, respectively.
We restate only the GRPO surrogate needed to define AGPO in Sect.~\ref{sec:ppo_grpo}.

\subsection{Adaptive Group Policy Optimization (AGPO)}
At a high level, AGPO is built around \emph{one probe, two controllers}. A single probe phase extracts a shared statistical state from reward dispersion, reward asymmetry, and answer-level disagreement. ATS centers this uncertainty against a running baseline and adjusts sampling temperature above or below $\tau_{\text{base}}$ to control rollout diversity, while adaptive clipping augments the same probe statistics with policy entropy and step-wise KL drift to set $\varepsilon_{\mathrm{adaptive}}$ for stable optimization. This shared-probe design---one statistical state jointly controlling exploration and update magnitude---is the central novelty of AGPO.

\subsubsection{Reinforcement Learning with Grouped Rollouts}

After supervised fine-tuning (SFT), the policy is refined by RL.  
For each input~$x$ we sample a \emph{group} of $G$ rollouts $\{e_i\}_{i=1}^{G}$ (default $G{=}8$).  
Here, a ``group'' always refers to multiple rollouts of a \emph{single} policy for the same prompt, rather than a team of distinct agents as in multi-agent RL.  
Each candidate answer~$\hat y_i$ receives a reward $r_i\in[0,1]$, e.g.\ exact-match or partial credit.  
Group-normalized advantage
\begin{equation}
\label{eq:adv}
\begin{aligned}
A_i &= \frac{r_i-\mu_r}{\hat{\sigma}+\epsilon_A},\\
\mu_r &= \frac{1}{G}\sum_{j=1}^{G} r_j,\qquad
\hat{\sigma} = \sqrt{\frac{1}{G-1}\sum_{j=1}^{G}(r_j-\mu_r)^2}\ \text{(default)}.
\end{aligned}
\end{equation}
\noindent\textit{We set $\epsilon_A=10^{-8}$ in all experiments.}
This normalisation scales updates by inter-candidate diversity.
In implementation, the rollout-level advantage $A_i$ is broadcast to the generated tokens of rollout $e_i$ when forming the token-level loss. Unless otherwise noted, $\hat{\sigma}$, $|\tilde{\kappa}_3(r)|$, and $\mathcal{E}_{\text{probe}}$ are computed per prompt group and then averaged over the update minibatch before entering the controllers; $H(\pi)$ and $\widehat{D}_{\mathrm{KL}}^{\mathrm{step}}$ are batch-level averages by definition.

\subsubsection{GRPO surrogate objective}
\label{sec:ppo_grpo}

AGPO builds on the clipped GRPO surrogate, a sequence-level analogue of PPO that uses group-normalized advantages and optionally a KL penalty to a reference policy. For a prompt $x$ and rollout $e_i$,
\begin{equation}
\rho_i(\theta)=\frac{\pi_\theta(e_i \mid x)}
{\pi_{\theta_{\text{old}}}(e_i \mid x)}.
\end{equation}

\begin{equation}
\label{eq:grpo_app}
\begin{aligned}
J_{\text{GRPO}}(\theta)
&=
\mathbb{E}_{x,\,\{e_i\}_{i=1}^{G}}
\Biggl[
\frac{1}{G}\sum_{i=1}^{G}
\Bigl(
\min\Bigl(
\rho_i(\theta)\,A_i,\\
&\qquad\qquad
\operatorname{clip}\bigl(\rho_i(\theta),1-\varepsilon,1+\varepsilon\bigr)A_i
\Bigr)
-\beta\,D_{\mathrm{KL}}\!\bigl(\pi_\theta\parallel\pi_{\text{ref}}\bigr)
\Bigr)
\Biggr].
\end{aligned}
\end{equation}

When $G=1$, Eq.~\eqref{eq:grpo_app} reduces to a PPO-style clipped surrogate. AGPO keeps the same structure and replaces the fixed clip radius $\varepsilon$ with an adaptive controller. We use two KL signals for different purposes: $D_{\mathrm{KL}}(\pi_{\theta_{\text{old}}}\parallel\pi_{\theta_{\text{new}}})$ serves as a detached step-wise drift signal, while $D_{\mathrm{KL}}(\pi_\theta\parallel\pi_{\text{ref}})$ is the reference regularizer in the optimization objective.

\subsubsection{Adaptive Clipping}

Fixed clipping ignores evolving batch uncertainty. AGPO therefore adapts the GRPO clip radius using reward dispersion, reward asymmetry, probe disagreement, policy entropy, and step-wise KL drift:
\begin{equation}
\begin{split}
\varepsilon_{\mathrm{adaptive}}
&= \operatorname{clip}\Bigl(
\varepsilon_{\mathrm{base}}\,
\frac{H(\pi_{\theta_{\mathrm{old}}})\bigl(1+\delta\,\mathcal{E}_{\mathrm{probe}}\bigr)}
{1+\alpha\,\hat{\sigma}+\zeta\,|\tilde{\kappa}_3(r)|
+\gamma\,\widehat{D}_{\mathrm{KL}}^{\mathrm{step}}},
\varepsilon_{\min},\,\varepsilon_{\max}
\Bigr).
\end{split}
\label{eq:eps_adapt}
\end{equation}
Because Eq.~\eqref{eq:step_kl_est} depends on the post-update policy, we use the measured $\widehat{D}_{\mathrm{KL}}^{\mathrm{step}}$ from the previous update (optionally EMA-smoothed) when computing $\varepsilon_{\mathrm{adaptive}}$ at the current step. This avoids circular dependence between the controller and the policy update.

Here $\hat{\sigma}$ is the within-group reward dispersion (std by default, with MAD/IQR variants discussed in Sect.~\ref{sec:robust_ablation}); $\tilde{\kappa}_3(r)$ is the safeguarded reward skewness; $\mathcal{E}_{\text{probe}}$ is the probe vote entropy; and $\widehat{D}_{\mathrm{KL}}^{\mathrm{step}}$ is the detached step-wise KL estimate from Eq.~\eqref{eq:step_kl_est}. Coefficients $\alpha,\gamma,\delta,\zeta\ge 0$ control sensitivity, and $[\varepsilon_{\min},\varepsilon_{\max}]$ bounds the controller.

Intuitively, higher policy entropy and probe disagreement enlarge the admissible update region when the batch appears uncertain, whereas larger reward dispersion, tail asymmetry, or step-wise KL drift shrink the region to prevent unstable policy jumps. We use these signals asymmetrically on purpose: probe disagreement and policy entropy reflect epistemic uncertainty and under-exploration, whereas reward dispersion, skewness, and step-wise KL drift indicate optimization risk and therefore tighten the trust region.

\begin{equation}
\tilde{\kappa}_3(r)
= \mathbb{I}[\hat{\sigma}\ge\sigma_{\min}]\,
\operatorname{clip}\!\bigl(\kappa_3(r),-\kappa_{\max},\kappa_{\max}\bigr),
\label{eq:skew_guard}
\end{equation}
where we use $\sigma_{\min}=10^{-6}$ and $\kappa_{\max}=10$.

\begin{equation}
\widehat{D}_{\mathrm{KL}}^{\mathrm{step}}
=
\frac{1}{|\mathcal{T}|}\sum_{(s_t,a_t)\in\mathcal{T}}
\Bigl(\log \pi_{\theta_{\mathrm{old}}}(a_t\mid s_t)
      -\log \pi_{\theta_{\mathrm{new}}}(a_t\mid s_t)\Bigr),
\label{eq:step_kl_est}
\end{equation}
where $\mathcal{T}$ is the set of sampled tokens in the minibatch. In practice, $\varepsilon_{\mathrm{adaptive}}$ is computed once per update from detached controller statistics and treated as a constant in Eq.~\eqref{eq:agpo}.

\subsubsection{AGPO Objective}
Substituting $\varepsilon_{\mathrm{adaptive}}$ into the GRPO objective yields the AGPO loss:
\begin{equation}
\label{eq:agpo}
\begin{aligned}
\mathcal{L}_{\text{AGPO}}(\theta)
&=
- \mathbb{E}_{x,\,\{e_i\}_{i=1}^{G}}
\Biggl[
\frac{1}{G}\sum_{i=1}^{G}
\Bigl(
\min\Bigl(
\rho_i(\theta)\,A_i,\\
&\qquad\qquad
\operatorname{clip}\bigl(
\rho_i(\theta),
1-\varepsilon_{\mathrm{adaptive}},
1+\varepsilon_{\mathrm{adaptive}}
\bigr)A_i
\Bigr)\\
&\qquad\qquad
-\beta\,D_{\mathrm{KL}}\!\bigl(\pi_\theta \,\|\, \pi_{\text{ref}}\bigr)
\Bigr)
\Biggr].
\end{aligned}
\end{equation}

Overall, AGPO couples an update controller ($\varepsilon_{\mathrm{adaptive}}$) with a rollout controller ($\tau_t$): the former regulates policy-step magnitude, while the latter regulates exploration during trajectory collection.

\begin{figure}[tb]
    \centering
    \includegraphics[width=0.95\linewidth]{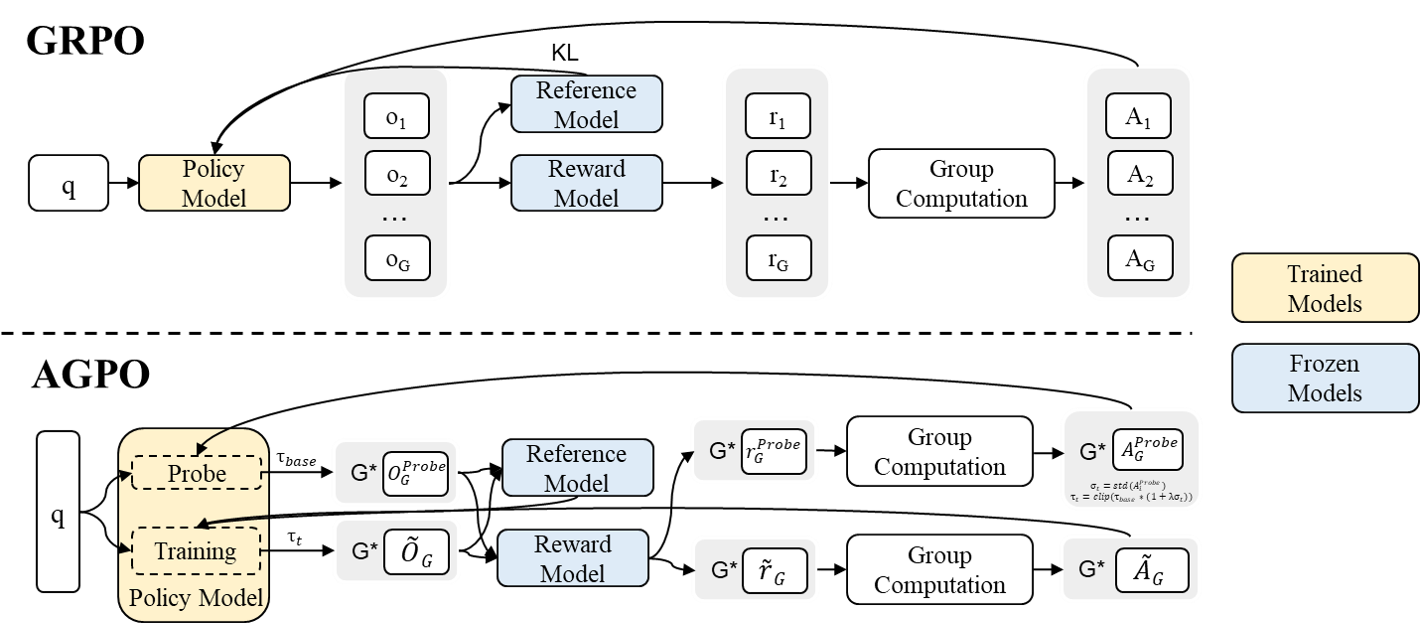}
    \caption{
    Two-phase AGPO with ATS. A \textbf{probe} at $\tau_{\text{base}}$ estimates group statistics (including reward dispersion $\hat{\sigma}$); a \textbf{train} phase then uses $\tau_t$ and the adaptive clip $\varepsilon_{\text{adaptive}}$ to update the policy via Eq.~\eqref{eq:agpo}. This coupling yields exploration when uncertain and stable updates otherwise.
    }
    \Description{Block diagram of AGPO with two phases: a probe at base temperature collects statistics such as reward dispersion and vote entropy; then a training phase uses an adaptive temperature and an adaptive clipping radius to update the policy via the AGPO loss. Arrows indicate the coupling between statistics and controllers.}
    \label{fig:agpo}
\end{figure}

\subsection{Adaptive Temperature Sampling Strategy}

Although advantage-based objectives guide \emph{updates}, the \emph{trajectories} collected for those updates hinge on the sampling temperature.  
Fixed temperatures ignore the changing reward landscape, so we introduce \textbf{Adaptive Temperature Sampling (ATS)}.

\subsubsection{Uncertainty--temperature coupling}
For the current minibatch we first run a probe at $\tau_{\text{base}}$ and compute
\begin{align}
\mathcal{E}_{\text{probe}}
  &= - \sum_{y} p_{\text{probe}}(y)\,\log p_{\text{probe}}(y),
\label{eq:probe_entropy}
\\[-2pt]
U_t &= w_r\,\hat{\sigma} \;+\; w_e\,\mathcal{E}_{\text{probe}} \;+\; w_k\,|\tilde{\kappa}_3(r)|,
\label{eq:uncert_combo}
\\[-2pt]
\widetilde U_t &= U_t - \bar U_t,
\label{eq:uncert_center}
\end{align}
where $p_{\text{probe}}(y)$ is the empirical distribution of extracted answers across probe rollouts, $\hat{\sigma}$ is the same within-group dispersion estimator as in Eq.~\eqref{eq:adv} (std/MAD/IQR) computed on probe rewards, $\tilde{\kappa}_3(r)$ is the safeguarded skewness defined in Eq.~\eqref{eq:skew_guard}, and $\bar U_t$ is the exponential moving average (EMA) of raw uncertainty scores from previous probe batches.

We maintain the running baseline by an exponential moving average
\begin{equation}
\bar U_t = \rho_U \bar U_{t-1} + (1-\rho_U) U_{t-1},
\end{equation}
where $\rho_U \in [0,1)$ is the EMA momentum. We initialize $\bar U_0=0$ and update it using probe batches only.

We then set
\begin{equation}
\tau_t = \operatorname{clip}\!\Bigl(\tau_{\text{base}}\bigl(1+\lambda\,\widetilde U_t\bigr),\; \tau_{\min},\tau_{\max}\Bigr),
\label{eq:tau_adapt}
\end{equation}
with non-negative weights $w_r,w_e,w_k$.
The coefficient $\lambda$ scales sensitivity. When $\widetilde U_t>0$, the batch is more uncertain than the running baseline and AGPO heats the sampling distribution; when $\widetilde U_t<0$, the batch is more certain than the baseline and AGPO cools the policy for exploitation.

While Eq.~\eqref{eq:tau_adapt} is heuristic, the raw score $U_t$ couples exploration strength to three orthogonal signals: (i) \emph{reward dispersion} $\hat{\sigma}$, (ii) \emph{answer-level disagreement} $\mathcal{E}_{\text{probe}}$, and (iii) \emph{tail asymmetry} $|\tilde{\kappa}_3(r)|$. Centering by $\bar U_t$ allows $\tau_t$ to move on both sides of $\tau_{\text{base}}$, so AGPO explores more on unusually uncertain batches and cools easier batches for exploitation \cite{holtzman_curious_2020}.

\subsection{AGPO Training Procedure}
\label{sec:impl_notes}

AGPO uses a probe--train loop (Fig.~\ref{fig:agpo}). For each update,
we sample one probe group per prompt at $\tau_{\text{base}}$, compute
group-level statistics, and average them into an update-level controller
state. We then form $U_t$, center it by $\bar U_t$, set scalar
$\tau_t$ and $\varepsilon_{\mathrm{adaptive}}$, resample training
rollouts at $\tau_t$, update the policy with Eq.~\eqref{eq:agpo}, and
carry the measured $\widehat{D}_{\mathrm{KL}}^{\mathrm{step}}$ to the
next step. Probe rollouts are used only for controller estimation and
are never reused for gradients.

\subsection{Finite-Horizon Convergence Sketch}
\label{sec:proof}

We provide a finite-horizon proof sketch for a token-level proxy of the clipped-surrogate component of AGPO, aligned with the autoregressive implementation used in practice. The result should therefore be read as an idealized guarantee for the practical optimizer rather than a full proof for the exact sequence-level objective in Eq.~\eqref{eq:agpo}:
\begin{equation}
\label{eq:agpo_surr}
\mathcal{L}^{\operatorname{clip}}_{\mathrm{AGPO}}(\theta)
= \mathbb{E}_{\mathcal{B}}\!\left[\rho_\theta(s,a)\,A_{\text{group}}(s,a)\right],
\end{equation}
where $A_{\text{group}}(s,a)$ denotes the group-normalized rollout advantage in Eq.~\eqref{eq:adv} broadcast to the generated tokens, and
\begin{equation}
\label{eq:rho_def}
\rho_\theta
=\operatorname{clip}\!\left(
\frac{\pi_\theta(a\mid s)}{\pi_{\theta_{\mathrm{old}}}(a\mid s)},
1-\varepsilon_{\text{adaptive}},\,
1+\varepsilon_{\text{adaptive}}
\right).
\end{equation}

Assume bounded rewards, a Lipschitz policy, and a finite-variance
stochastic gradient estimator satisfying
$\mathbb{E}[\|\hat g_t\|_2^2] \le C_g$.
For the idealized finite-horizon analysis we use
$\eta_t=\eta_0/\sqrt{T}$.

\begin{lemma}[Bounded controller]
Because $\varepsilon_{\text{adaptive}}$ and $\tau_t$ are clipped by construction, we have $\varepsilon_{\min}\le \varepsilon_{\text{adaptive}}\le \varepsilon_{\max}$ and $\tau_t\in[\tau_{\min},\tau_{\max}]$ for every update.
\end{lemma}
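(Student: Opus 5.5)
The plan is to reduce the claim to a single elementary property of the clipping operator. For reals $a\le b$, define $\operatorname{clip}(z,a,b)=\min(\max(z,a),b)$. The first step is to check that this quantity always lies in $[a,b]$. Since $\max(z,a)\ge a$ and $b\ge a$, we get $\min(\max(z,a),b)\ge a$. The bound $\min(\cdot,b)\le b$ holds trivially. Together these give $\operatorname{clip}(z,a,b)\in[a,b]$ for every finite $z$.

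The second step applies this to Eq.~\eqref{eq:eps_adapt} with $a=\varepsilon_{\min}$ and $b=\varepsilon_{\max}$, and to Eq.~\eqref{eq:tau_adapt} with $a=\tau_{\min}$ and $b=\tau_{\max}$. This yields both inclusions at every update $t$.

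The one point that needs care is that the argument being clipped is a well-defined real number at every step. I expect this to be the main obstacle, and it has two parts.

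\emph{Denominator of Eq.~\eqref{eq:eps_adapt}.} The denominator is $1+\alpha\hat{\sigma}+\zeta|\tilde{\kappa}_3(r)|+\gamma\widehat{D}_{\mathrm{KL}}^{\mathrm{step}}$, with $\alpha,\gamma,\zeta\ge0$. The terms $\hat\sigma\ge0$ and $|\tilde{\kappa}_3|\le\kappa_{\max}$ are harmless, and the latter is finite by the guard in Eq.~\eqref{eq:skew_guard}. The difficulty is that the sample estimator in Eq.~\eqref{eq:step_kl_est} can be negative, so the denominator could be small or nonpositive. I would handle this by using the nonnegative, detached (optionally EMA-smoothed) estimate from the previous update, clamped at zero if necessary. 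This makes the denominator at least $1$.

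\emph{Finiteness of the inputs.} The remaining inputs are bounded by their construction. $H(\pi_{\theta_{\mathrm{old}}})$ is at most $\log|\mathcal V|$, where $|\mathcal V|$ is the vocabulary size. $\mathcal{E}_{\text{probe}}$ is at most $\log G$. Rewards lie in $[0,1]$, so $\hat\sigma$ is finite. Hence $U_t$ and $\bar U_t$ are finite; the latter follows by induction from $\bar U_0=0$ and the EMA recursion, since it is a convex combination of finite values.

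With both arguments finite, the clip property from the first step gives the lemma immediately. I also note a fallback that avoids the positivity issue entirely: if the controller argument were ever $\pm\infty$, the operator $\min(\max(\cdot,a),b)$ extended to $[-\infty,\infty]$ still returns a value in $[a,b]$. So the bound holds whenever the argument is defined at all, including the NaN-free edge cases.
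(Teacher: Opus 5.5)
Your proof is correct and matches the paper's own one-line justification: $\varepsilon_{\text{adaptive}}$ and $\tau_t$ are outputs of a clip onto $[\varepsilon_{\min},\varepsilon_{\max}]$ and $[\tau_{\min},\tau_{\max}]$, and $\min(\max(z,a),b)\in[a,b]$ whenever $a\le b$. Your well-definedness discussion goes beyond the paper, which tacitly assumes the clipped arguments are real numbers; note that a small or negative denominator in Eq.~\eqref{eq:eps_adapt} does not threaten the bound, since clipping maps any real into $[a,b]$, so only an exactly vanishing denominator matters, and clamping $\widehat{D}_{\mathrm{KL}}^{\mathrm{step}}$ at zero is your own modification rather than part of the paper's controller.
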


\begin{theorem}[Stationary-point convergence]
Under the assumptions above,
\begin{equation}
\frac{1}{T}\sum_{t=1}^{T}
\mathbb{E}\!\left[
\left\|\nabla_\theta
\mathcal{L}^{\operatorname{clip}}_{\mathrm{AGPO}}(\theta_t)\right\|_2^{2}
\right]
\le \frac{C}{\sqrt{T}},
\label{eq:stationary_rate_main}
\end{equation}
for a constant $C>0$ independent of $T$.
\end{theorem}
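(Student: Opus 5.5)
The plan is to run the standard nonconvex SGD descent-lemma argument on $f(\theta)=-\mathcal{L}^{\operatorname{clip}}_{\mathrm{AGPO}}(\theta)$, or equivalently ascent on $\mathcal{L}^{\operatorname{clip}}_{\mathrm{AGPO}}$. The Bounded controller lemma supplies the uniform constants. Two properties of the surrogate are needed first.

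\textbf{Boundedness.} By the Bounded controller lemma, the clipped ratio in Eq.~\eqref{eq:rho_def} satisfies $\rho_\theta\in[1-\varepsilon_{\max},1+\varepsilon_{\max}]$. Because rewards lie in $[0,1]$ and $\hat\sigma+\epsilon_A\ge\epsilon_A$, the advantage $A_{\text{group}}$ is bounded by a constant depending only on $G$ and $\epsilon_A$. Hence $\mathcal{L}^{\operatorname{clip}}_{\mathrm{AGPO}}$ is bounded above by some $L^\star<\infty$, uniformly over $t$, whatever values $\varepsilon_{\text{adaptive}}$ and $\tau_t$ take.

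\textbf{Smoothness.} I will take the Lipschitz-policy assumption in its usual strengthened form: $\log\pi_\theta$ has bounded and Lipschitz gradient in $\theta$. On the unclipped region the gradient is $\rho\,\nabla\log\pi_\theta\,A$. This gives $\beta_L$-smoothness of the surrogate, with $\beta_L$ depending on $\varepsilon_{\max}$, the advantage bound, and the policy constants.

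\textbf{Descent step.} With $\eta_t=\eta_0/\sqrt T$ and the update $\theta_{t+1}=\theta_t+\eta_t\hat g_t$, where $\mathbb{E}[\hat g_t\mid\theta_t]=\nabla\mathcal{L}(\theta_t)$, smoothness gives
\[
\mathbb{E}[\mathcal{L}(\theta_{t+1})]\ge \mathbb{E}[\mathcal{L}(\theta_t)]+\eta_t\,\mathbb{E}\|\nabla\mathcal{L}(\theta_t)\|^2-\tfrac{\beta_L}{2}\eta_t^2 C_g .
\]
Summing over $t=1,\dots,T$, telescoping, and using $\mathcal{L}\le L^\star$ yields
\[
\frac{\eta_0}{\sqrt T}\sum_{t=1}^T\mathbb{E}\|\nabla\mathcal{L}(\theta_t)\|^2\le L^\star-\mathcal{L}(\theta_1)+\frac{\beta_L\eta_0^2C_g}{2}.
\]
Dividing by $\eta_0\sqrt T$ gives Eq.~\eqref{eq:stationary_rate_main} with
\[
C=\frac{L^\star-\mathcal{L}(\theta_1)}{\eta_0}+\frac{\beta_L\eta_0C_g}{2}.
\]
This constant is independent of $T$ because every ingredient was bounded using only $\varepsilon_{\max}$, the reward range, and $\epsilon_A$.

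\textbf{Main obstacle.} The difficulty is that the objective changes with $t$. The reference policy $\pi_{\theta_{\mathrm{old}}}$, the controller value $\varepsilon_{\text{adaptive}}$, and the sampling distribution at $\tau_t$ all move, so $\mathcal{L}$ is really a sequence $\mathcal{L}_t$ and the telescoping step is not exact. A second issue is that the clip function is only piecewise smooth.

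I would handle this in two steps:
\begin{enumerate}
\item Follow the paper's idealization and treat the surrogate within each update as a fixed function of $\theta$. The detached controller statistics and the frozen $\theta_{\mathrm{old}}$ make this reasonable, and smoothness is interpreted almost everywhere, or via a smoothed clip.
\item Add a drift term $\sum_t\bigl(\mathcal{L}_{t+1}(\theta_{t+1})-\mathcal{L}_t(\theta_{t+1})\bigr)$. This must be bounded by $O(\sqrt T)$ to preserve the rate. I would bound each increment by the uniform range $2L^\star$ times an assumption of slowly varying controllers and data distribution.
\end{enumerate}
If that drift assumption cannot be justified, the result should be stated as a per-update, fixed-surrogate guarantee, consistent with the paper's own framing of the theorem as an idealized sketch.
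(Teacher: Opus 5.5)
Your skeleton is the paper's own: a one-step descent inequality, summation over $t$, and $\eta_t=\eta_0/\sqrt{T}$. You instantiate the paper's unspecified Lyapunov function as $L^\star-\mathcal{L}^{\operatorname{clip}}_{\mathrm{AGPO}}$, and for a single frozen surrogate your algebra and your constant $C$ are right. The gap is in the step meant to reach the actual AGPO iterates, where $\theta_{\mathrm{old}}$, $\varepsilon_{\mathrm{adaptive}}$ and the sampling law at $\tau_t$ change at every update.

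\textbf{The drift threshold is off.} Set $K=L^\star-\mathcal{L}(\theta_1)+\beta_L\eta_0^2C_g/2$. An additive drift $D_T$ turns your final inequality into $\frac{1}{T}\sum_t\mathbb{E}\|\nabla\mathcal{L}_t(\theta_t)\|^2\le (K+D_T)/(\eta_0\sqrt{T})$. So $D_T=O(\sqrt{T})$ leaves only an $O(1)$ bound; you need $D_T=O(1)$.

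\textbf{No slow-variation assumption can deliver this.} The drift is produced by the re-anchoring $\theta_{\mathrm{old}}\leftarrow\theta_t$, which is present even with constant $\varepsilon$ and $\tau$. At the anchor every ratio equals $1$, so $\mathcal{L}_t(\theta_t)=m_t$ is just the batch-mean broadcast advantage. This is identically $0$ at sequence level, since group-normalized advantages sum to zero, so it carries no information about progress. Hence $\mathcal{L}_{t+1}(\theta_{t+1})-\mathcal{L}_t(\theta_{t+1})=(m_{t+1}-m_t)-\bigl(\mathcal{L}_t(\theta_{t+1})-\mathcal{L}_t(\theta_t)\bigr)$. Up to telescoping terms, the drift is minus the very per-step gain that the descent lemma lower-bounds by $\eta_t\|\nabla\mathcal{L}_t(\theta_t)\|^2$. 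Bounding it is therefore as hard as the theorem itself, and the telescoped inequality collapses to a tautology. The surrogate cannot serve as the Lyapunov function. Your fallback (one frozen surrogate for all $T$ steps) is no longer a statement about the AGPO iterates.

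\textbf{What is missing is a $t$-independent potential.} For example, take $V_t=J^\star-J(\theta_t)$ with $J(\theta)$ the expected reward. It is bounded and, under your strengthened policy assumption, $\beta_J$-smooth. Apply the descent lemma to $J$ along $\theta_{t+1}=\theta_t+\eta_t\hat g_t$ with $\mathbb{E}[\hat g_t\mid\mathcal{F}_t]=\nabla\mathcal{L}_t(\theta_t)$.

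The anchor ratio $1$ lies strictly inside the clip band. This is where $\varepsilon_{\mathrm{adaptive}}\ge\varepsilon_{\min}>0$ from the Bounded-controller lemma enters; you only used $\varepsilon_{\max}$. So the clip is inactive near $\theta_t$, and $\nabla\mathcal{L}_t(\theta_t)=\mathbb{E}_{\mathcal{B}_t}[A_{\text{group}}\nabla\log\pi_{\theta_t}]$. This also disposes of your kink problem, which \emph{smoothness almost everywhere} would not: at a concave kink of the clip, the surrogate falls below its linearization by a first-order amount.

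You then need an alignment condition $\langle\nabla J(\theta_t),\nabla\mathcal{L}_t(\theta_t)\rangle\ge c\,\|\nabla\mathcal{L}_t(\theta_t)\|^2$ for some $c>0$. It holds in the idealized on-policy case ($\tau_t=1$, no std-normalization). The std-normalization and $\tau_t\ne 1$ are exactly the biases the bounded controller must keep in check. Telescoping $V_t$ then gives the claim with $C=\bigl(J^\star-J(\theta_1)\bigr)/(c\,\eta_0)+\beta_J\eta_0C_g/(2c)$.

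The paper's sketch only asserts that a suitable $V_t$ exists. You located the real difficulty correctly, but the drift fix you propose would not resolve it.
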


\begin{proof}[Sketch]
The bounded controller keeps $\varepsilon_{\text{adaptive}}$ and $\tau_t$ uniformly controlled, so the induced perturbation can be absorbed into the smoothness constants. Standard stochastic-approximation analysis then yields a descent inequality of the form
\[
\mathbb{E}[V_{t+1}\mid\mathcal{F}_t]
\le
V_t
-\eta_t\|\nabla \mathcal{L}^{\operatorname{clip}}_{\mathrm{AGPO}}(\theta_t)\|_2^2
+\eta_t^2 C_g
\]
for a suitable Lyapunov function $V_t$. Summing over $t=1,\dots,T$ and using $\eta_t=\eta_0/\sqrt{T}$ gives Eq.~\eqref{eq:stationary_rate_main}.
\end{proof}

\paragraph{Remark.}
This theorem is an idealized guarantee for the clipped-surrogate
controller. Our experiments use cosine learning-rate decay and a fixed
temperature-sensitivity coefficient $\lambda$ in
Eq.~\eqref{eq:tau_adapt}, so the empirical section should be read as
validation of the practical schedule rather than a direct
instantiation of the theorem.

\section{Empirical Evaluation}

\subsection{Setup}
\label{sec:exp_details}

We evaluate on nine public English and Chinese math/STEM benchmarks:
GSM8K \cite{cobbe_training_2021}, MATH
\cite{hendrycks_measuring_2021}, OCW
\cite{lewkowycz_solving_2022}, SAT \cite{azerbayev_llemma_2024},
MMLU$_{\mathrm{STEM}}$ \cite{hendrycks_measuring_2021-1},
CodeContests \cite{li_competition-level_2022}, CMATH
\cite{wei_cmath_2023}, GaokaoMath-Cloze, and GaokaoMath-QA
\cite{zhong_agieval_2024}. We use the standard training and evaluation
split for each dataset and report accuracy on the evaluation split,
except for MMLU$_{\mathrm{STEM}}$, where we follow the common
validation protocol.
Unless otherwise stated, results are averaged over three fixed seeds $\{42,1337,2026\}$.

Rewards are mapped to $[0,1]$ using dataset-specific official evaluation protocols whenever available. For math/QA datasets, reward is exact match after deterministic answer extraction and normalization; when official partial credit is available, we rescale it to $[0,1]$. For MMLU$_{\mathrm{STEM}}$, reward is binary multiple-choice accuracy. For CodeContests, reward is the official test pass rate, with compilation or runtime failures assigned zero.

We start from the public Qwen2.5-14B checkpoint \cite{qwen_qwen25_2025}
and additionally evaluate transfer to Llama-3-8B
\cite{grattafiori_llama_2024} and Gemma-2-9B
\cite{team_gemma_2024}. All sequences use a maximum length of 1024
tokens. We train with AdamW \cite{loshchilov_decoupled_2019}
($( \beta_1,\beta_2)=(0.9,0.95)$, weight decay $0.1$), cosine
learning-rate decay with peak learning rate $1.5\times10^{-5}$,
gradient clipping $1.0$, bf16, DeepSpeed-ZeRO-2
\cite{rajbhandari_zero_2020}, and activation checkpointing on
2$\times$A800 80GB GPUs. Unless otherwise noted, RL/controller
hyper-parameters are $G=8$, $\tau_{\text{base}}=1.0$,
$[\tau_{\min},\tau_{\max}]=[0.5,1.5]$, $\lambda=0.15$,
$\varepsilon_{\mathrm{base}}=0.2$,
$[\varepsilon_{\min},\varepsilon_{\max}]=[0.05,0.4]$, $\beta=0.03$,
$(\alpha,\gamma,\delta,\zeta)=(1.0, 0.5, 0.2, 0.05)$,
$(w_r,w_e,w_k)=(1.0,1.0,0.1)$, $\rho_U=0.95$, and nucleus sampling
$p=0.95$ \cite{holtzman_curious_2020} without top-$k$.

For the main Qwen2.5-14B comparison, each on-policy RL method is
trained for 300M generated continuation tokens on a single node with
2$\times$A800 80GB GPUs. We count generated continuation tokens only,
excluding prompt tokens and evaluation generations, while including
both probe-phase and training-phase generations for fair token-budget
accounting. To reduce contamination risk, we de-duplicate SFT/RL
training data against evaluation prompts, run a low-temperature
memorization probe on dev prompts, and validate transfer on two additional model families.

\subsection{Main Results}
\subsubsection{Evaluation protocol}
Unless otherwise stated, we evaluate on each dataset's standard split
(MMLU$_\mathrm{STEM}$ uses the common validation protocol). Final
scores use greedy decoding unless Maj@$k$ is explicitly reported, in
which case the same $(k,\tau,p)$ is used for all methods
\cite{wang_self-consistency_2023}.

\subsubsection{Baselines and Controls}

To ensure that AGPO’s gains are not merely the result of hyper-parameter choice, we include a broad set of baselines and ablations:

\begin{itemize}
    \item \textbf{PPO} \cite{schulman_proximal_2017}: standard RLHF baseline with a separate value head.
    \item \textbf{Adaptive-KL PPO} \cite{ouyang_training_2022}:
    PPO with an adaptive KL controller, included as a stronger critic-based baseline.
    \item \textbf{DPO} \cite{rafailov_direct_2024}: direct preference optimization without a value model.
    \item \textbf{GRPO (fixed $\varepsilon$)}\cite{shao_deepseekmath_2024}: original GRPO with a constant clipping coefficient.
    \item \textbf{GRPO + ATS}: GRPO plus Adaptive-Temperature Sampling, isolating the impact of ATS alone.
    \item \textbf{AGPO (-ATS)}: Adaptive clipping only (ATS disabled), isolating the impact of adaptive $\varepsilon$.
    \item \textbf{AGPO (full)}: our complete method with both adaptive clipping and ATS.
\end{itemize}
All \emph{on-policy RL} baselines are retrained under the common
training, seed, and token-accounting protocol described in
Sect.~\ref{sec:exp_details}; in particular, AGPO probe tokens are
counted toward the same total budget. DPO is reported as an additional
offline preference-learning baseline and is therefore excluded from
throughput and wall-clock comparisons.

\subsubsection{Performance Comparison}
\begin{table}[ht]
\centering
\caption{Accuracy (\%) on English and Chinese math/STEM benchmarks.
Values are mean accuracies over 3 seeds.
Best in \textbf{bold}; second best in \textit{italics}.}
\label{tab:main_results}
\resizebox{\textwidth}{!}{
\setlength{\tabcolsep}{5.5pt}
\begin{tabular}{l cccccc ccc}
\toprule
 & \multicolumn{6}{c}{\textbf{English Benchmarks}} 
 & \multicolumn{3}{c}{\textbf{Chinese Benchmarks}} \\
\cmidrule(lr){2-7}\cmidrule(lr){8-10}
\textbf{Method} 
& GSM8K & MATH & OCW & SAT & MMLU$_\mathrm{STEM}$ & CodeContests
& CMATH & GaokaoCloze & GaokaoQA \\
\midrule
PPO              
& 54.1 & 32.5 & 8.2  & 76.1 & 48.8 & 12.3 
& 71.2 & 18.9  & 37.7 \\

DPO              
& 57.0 & 33.2 & 9.1  & 77.7 & 46.9 & 12.4 
& 72.5 & 18.8 & 37.5 \\

Adaptive-KL PPO
& 64.8 & 36.9 & 14.6 & 84.3 & 56.0 & 13.3
& 73.4 & 19.6 & 38.5 \\

GRPO (fixed $\varepsilon$) 
& 65.6 & 37.3 & 15.4  & 80.6 & 57.6 & 14.1
& 74.1 & 20.3  & 39.8 \\

GRPO + ATS       
& 66.4 & 38.5 & \textit{16.3}  & 87.9 & 57.1 & 14.9
& 75.9 & 20.5  & 39.2 \\

AGPO (-ATS)      
& \textit{66.8} & \textit{39.2} & 16.2  & \textit{88.6} & \textit{57.9} & \textit{15.2}
& \textit{76.4} & \textit{21.2}  & \textit{40.6} \\

\textbf{AGPO (full)} 
& \textbf{67.3} & \textbf{40.5} & \textbf{17.3}  & \textbf{89.3} & \textbf{58.1} & \textbf{17.3}
& \textbf{77.6} & \textbf{25.9}  & \textbf{41.1} \\
\bottomrule
\end{tabular}
}
\end{table}

\paragraph{Observations.}
GRPO+ATS already improves over GRPO, AGPO (-ATS) further shows the value of adaptive clipping, and full AGPO performs best on all nine benchmarks. Averaged over benchmarks, AGPO gains \textbf{+8.3} pp over PPO and \textbf{+1.4} pp over the strongest ablation.

\subsubsection{Training Time Comparison}

Table~\ref{tab:tps_wallclock} compares (i) total throughput (k tok/s) and (ii) wall-clock hours to 65\% GSM8K-dev across methods.
AGPO incurs negligible overhead relative to GRPO (an $\approx$3\% throughput drop) while remaining \textbf{$\approx$1.5$\times$} faster than PPO in throughput and \textbf{$\approx$1.6$\times$} faster in wall-clock to target accuracy.
These numbers confirm that the critic-free design, even with online statistics, is compute-efficient.

\begin{table}[ht]
\centering
\caption{Throughput and wall-clock comparison on 2 $\times$ A800 80 GB GPUs for on-policy methods only.}
\label{tab:tps_wallclock}
\begin{tabular}{lcc}
\toprule
Method & Total throughput (k tok/s) $\uparrow$ & Hours to 65\% GSM8K $\downarrow$ \\
\midrule
PPO             & 1.72 & 46.2 \\
Adaptive-KL PPO & 1.80 & 42.7 \\
GRPO            & 2.73 & 30.9 \\
GRPO + ATS      & 2.65 & 29.8 \\
\textbf{AGPO (full)} & \textbf{2.64} & \textbf{28.3} \\
\bottomrule
\end{tabular}

\raggedright\footnotesize
\textit{Total throughput aggregates generated tokens per second across all concurrent generation streams on both GPUs; probe tokens are included.}
\end{table}

\subsection{Cross-Family Validation}
\label{sec:crossfamily}

To assess transfer beyond Qwen, we fine-tune Llama-3-8B and Gemma-2-9B under equal token budgets across methods within each backbone. AGPO consistently outperforms fixed-clip GRPO on both GSM8K and MMLU$_\mathrm{STEM}$ across all three backbones, suggesting that the controller is not specific to a single model family.

\begin{table}[t]
\centering
\caption{Cross-family results on GSM8K and MMLU$_\mathrm{STEM}$ (mean$\pm$std over 3 seeds) under equal token budgets that count probe tokens. Avg is the unweighted mean over the six backbone--task cells.}
\label{tab:cross_all}
\small

\begin{tabular}{lccc}
\toprule
\multicolumn{4}{c}{\textbf{GSM8K}} \\
\midrule
\textbf{Method} & \textbf{Llama-3-8B} & \textbf{Qwen2.5-14B} & \textbf{Gemma-2-9B} \\
\midrule
GRPO (fixed $\varepsilon$)
& 63.0$\pm$0.3 & 65.6$\pm$0.3 & 61.8$\pm$0.3 \\
AGPO (-ATS)
& 64.2$\pm$0.3 & 66.8$\pm$0.3 & 63.0$\pm$0.3 \\
AGPO (full)
& \textbf{65.0$\pm$0.3} & \textbf{67.3$\pm$0.3} & \textbf{63.7$\pm$0.3} \\
\bottomrule
\end{tabular}

\vspace{1pt}

\begin{tabular}{lcccc}
\toprule
\multicolumn{5}{c}{\textbf{MMLU$_\mathrm{STEM}$ and Avg}} \\
\midrule
\textbf{Method} & \textbf{Llama-3-8B} & \textbf{Qwen2.5-14B} & \textbf{Gemma-2-9B} & \textbf{Avg (\%)} \\
\midrule
GRPO (fixed $\varepsilon$)
& 55.0$\pm$0.2 & 57.6$\pm$0.2 & 53.2$\pm$0.2 & 59.4 \\
AGPO (-ATS)
& 55.6$\pm$0.2 & 57.9$\pm$0.2 & 53.9$\pm$0.2 & 60.2 \\
AGPO (full)
& \textbf{56.0$\pm$0.2} & \textbf{58.1$\pm$0.2} & \textbf{54.3$\pm$0.2} & \textbf{60.7} \\
\bottomrule
\end{tabular}
\end{table}

\subsection{Ablation Study}
\label{sec:ablation}
\subsubsection{Adaptive Clipping}

We compare fixed-clip GRPO with AGPO (-ATS), isolating the effect of
adaptive clipping. It improves accuracy by +1.2 pp on GSM8K and
+1.9 pp on MATH.

\subsubsection{Adaptive Temperature Sampling}

We analyze ATS in terms of diversity, exploration, and stability.

\begin{figure}[tb]
    \centering
    \includegraphics[width=0.62\linewidth]{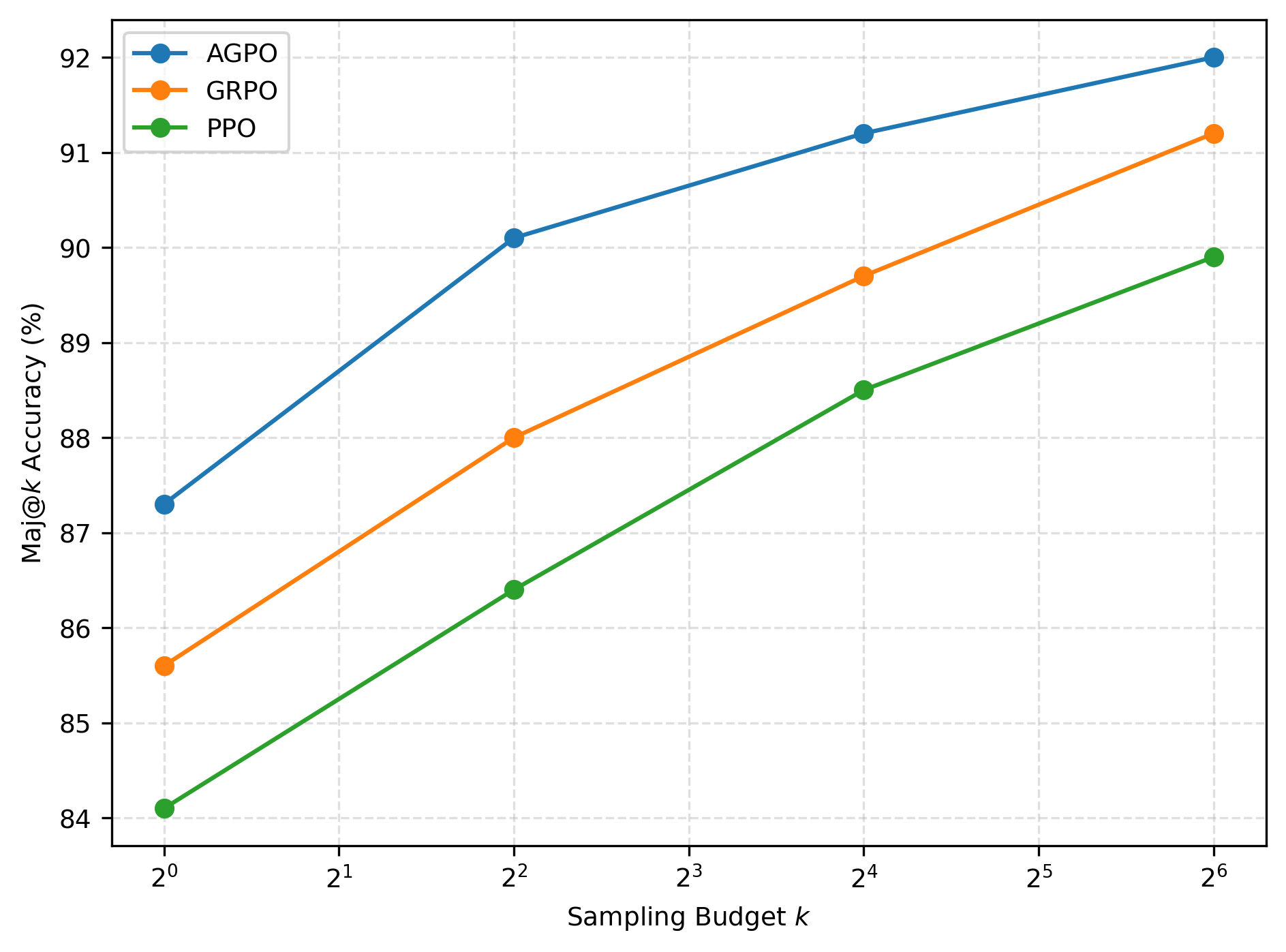}
    \caption{Maj@$k$ exact-match accuracy on GSM8K dev for $k \in \{1,4,16,64\}$. AGPO shows larger gains at intermediate sampling budgets, indicating a better diversity--consistency trade-off.}
    \Description{Line chart showing Maj-at-k accuracy on the GSM8K development set as the sampling budget increases. The horizontal axis lists k values 1, 4, 16, and 64. The vertical axis shows exact-match accuracy. Curves for different methods rise with larger k, indicating that majority voting over more samples improves answer consistency.}
    \label{fig:majk_curve}
\end{figure}

\paragraph{Maj@$k$ curves.}
We evaluate Maj@$k$ on GSM8K dev for $k \in \{1,4,16,64\}$. AGPO matches GRPO at $k{=}1$ and improves more clearly at $k{=}4$ and $16$, indicating better useful diversity. Our AGPO-14B reaches 92.0\% Maj@64 ($\pm 0.3$ pp over 3 seeds).

\paragraph{KL and $\varepsilon_{\text{adaptive}}$ traces.}
Figure~\ref{fig:kl_eps_trace} shows that KL remains bounded while $\varepsilon_t$ adapts downward and eventually reaches the lower bound of $0.05$, indicating stable updates without manual tuning.

\paragraph{Effect of ATS on sample diversity.}
Compared with fixed-temperature GRPO, GRPO+ATS improves Maj@4 from \textbf{87.1\%} to \textbf{88.4\%} on GSM8K and from \textbf{45.0\%} to \textbf{47.8\%} on MATH, with $\pm 0.3$ pp variance across seeds.

Beyond the main setup, we also test leave-one-out ablations and robust dispersion substitutes (MAD/IQR) in Sect.~\ref{sec:robust_ablation}.

\begin{figure}[ht]
    \centering
    \includegraphics[width=0.85\linewidth]{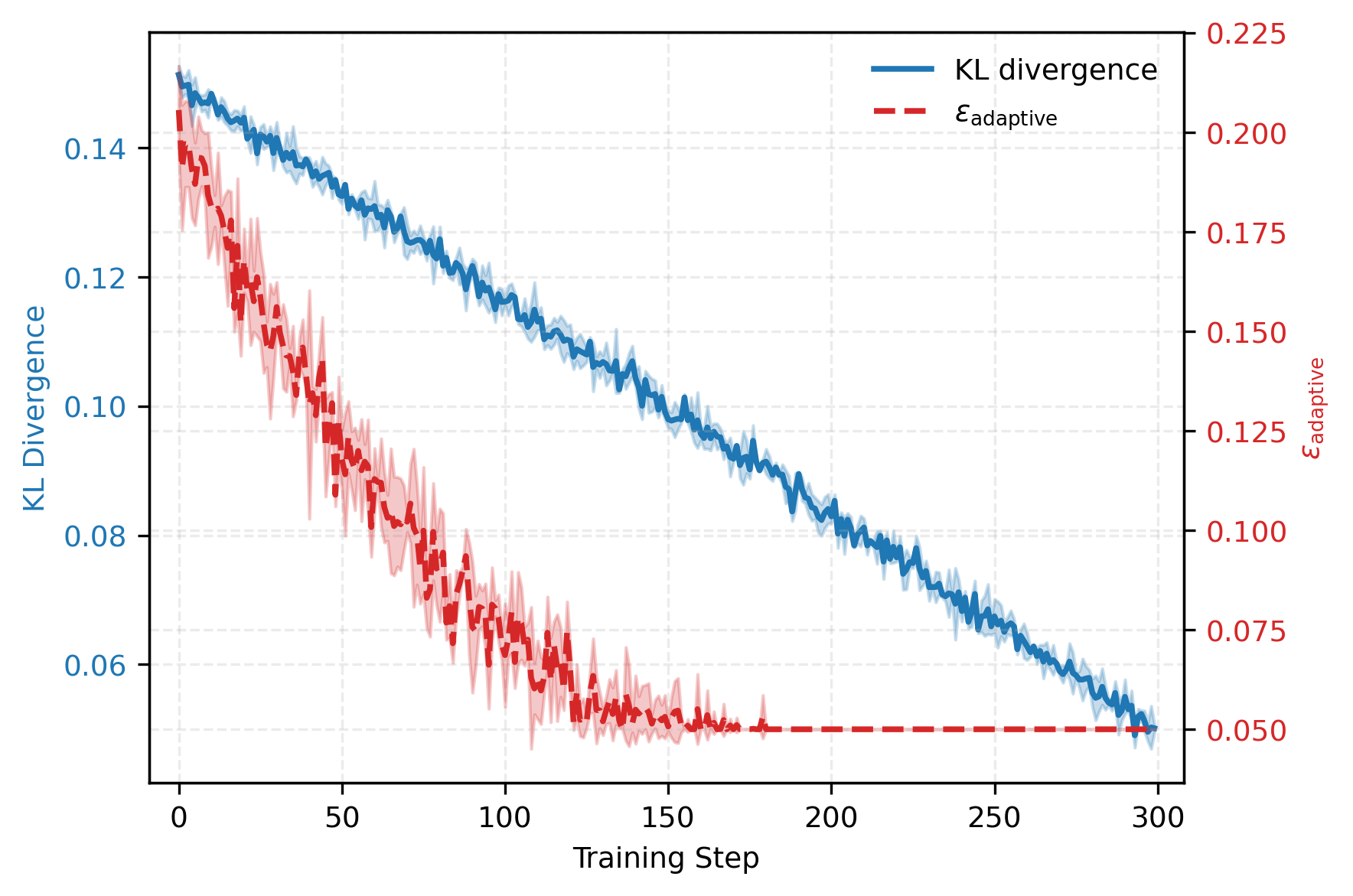}
    \caption{Mean KL divergence and adaptive clip radius $\varepsilon_t$ over 300 training steps (3 seeds). AGPO keeps KL bounded while shrinking $\varepsilon_t$ as training stabilizes.}
    \Description{Line chart with two curves over 300 training steps averaged across three seeds. The blue curve (left y-axis) shows the mean KL divergence gradually decreasing and remaining bounded. The red curve (right y-axis) shows the adaptive clipping coefficient epsilon-t decreasing and eventually saturating near 0.05. Shaded bands around both curves indicate plus or minus one standard deviation.}
    \label{fig:kl_eps_trace}
\end{figure}

\paragraph{ATS behavior.}
Figure~\ref{fig:tau_acc_scatter} plots adaptive temperature $\tau_t$
against batch accuracy. Because $\tau_t$ is driven by centered
uncertainty $\widetilde U_t$, ATS heats early uncertain batches and
cools later, more confident batches, illustrating a shift from
exploration to exploitation.

\begin{figure}[t]
    \centering
    \includegraphics[width=0.72\linewidth]{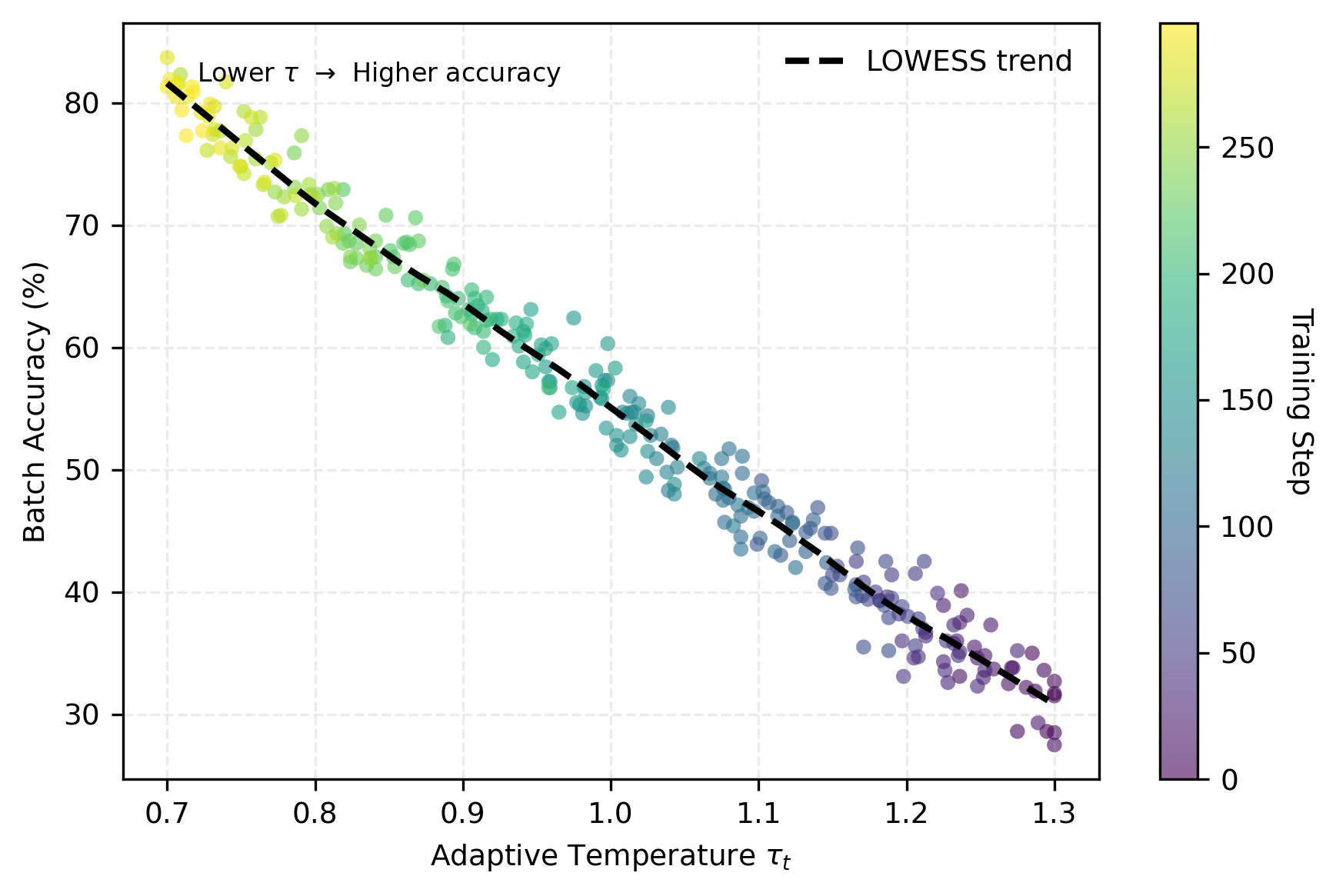}
    \caption{Batch accuracy versus adaptive temperature $\tau_t$ during training. ATS heats early uncertain batches and cools later, more confident batches.}
    \Description{Scatter plot of adaptive temperature versus batch accuracy over training; points transition from dark to light by step index, showing cooling temperature associated with rising accuracy.}
    \label{fig:tau_acc_scatter}
\end{figure}

\paragraph{Phase convention.}
Unless stated otherwise, ATS uses probe rewards to compute
$\hat{\sigma}$ in $U_t$, whereas Eq.~\eqref{eq:adv} uses training-phase
rewards for gradient trajectories; both use the same dispersion
estimator (std/MAD/IQR).

\subsection{Signal Robustness}
\label{sec:robust_ablation}

\paragraph{Motivation.}
AGPO uses probe-derived dispersion, skewness, disagreement, and
step-wise KL. Because $G{=}8$ can make moment estimates noisy, we run
leave-one-out ablations over
$\{\hat{\sigma}, |\tilde{\kappa}_3(r)|, \mathcal{E}_{\text{probe}},
\widehat{D}_{\mathrm{KL}}^{\mathrm{step}}\}$ and test MAD/IQR as
robust substitutes for std:
\[
\mathrm{MAD} = \mathrm{median}_i\,|r_i - \mathrm{median}_j(r_j)|,\quad
\mathrm{IQR} = Q_{0.75}(r) - Q_{0.25}(r),
\]
with
\[
\hat{\sigma}_{\mathrm{MAD}} = 1.4826 \times \mathrm{MAD},\qquad
\hat{\sigma}_{\mathrm{IQR}} = \mathrm{IQR}/1.349.
\]

\paragraph{Implementation.}
We expose two toggles:
\begin{itemize}[leftmargin=1.2em]
\item \texttt{dispersion\_mode} $\in \{\texttt{std}, \texttt{mad}, \texttt{iqr}\}$, controlling $\hat{\sigma}$ used throughout AGPO, including Eqs.~\eqref{eq:adv}, \eqref{eq:uncert_combo}, and \eqref{eq:eps_adapt}.

\item \texttt{signals\_mask} $\in \{0,1\}^4$ for $\{\hat{\sigma}, |\tilde{\kappa}_3|, \mathcal{E}_{\text{probe}}, \widehat{D}_{\mathrm{KL}}^{\mathrm{step}}\}$. For probe-derived signals $\{\hat{\sigma}, |\tilde{\kappa}_3|, \mathcal{E}_{\text{probe}}\}$, masking removes the term from both Eqs.~\eqref{eq:uncert_combo} and \eqref{eq:eps_adapt}; for $\widehat{D}_{\mathrm{KL}}^{\mathrm{step}}$, masking affects Eq.~\eqref{eq:eps_adapt} only.
\end{itemize}

\paragraph{Metrics.}
Besides accuracy, we report training \emph{stability}:
\begin{enumerate}[leftmargin=1.2em,itemsep=1pt,topsep=1pt]
\item Mean KL to reference $\overline{D}_{\mathrm{KL}}(\pi\Vert\pi_{\text{ref}})$.
\item \% of steps hitting clip bounds (\emph{clip saturation rate}).
\item Gradient-norm variance $\mathrm{Var}(\|\nabla\mathcal{L}\|_2)$ (per 1k steps).
\end{enumerate}

\subsubsection{Leave-One-Out Ablations}
\label{sec:loo}

Table~\ref{tab:loo} reports \emph{delta} accuracy (pp) and stability metrics when removing one signal at a time from AGPO (full).
All runs use the same token budget and seeds as the main paper.

\begin{table}[ht]
\centering
\caption{\textbf{Leave-one-out} ablations on key datasets. 
Numbers show \emph{delta} w.r.t.\ AGPO (full); negative is worse for accuracy, positive is worse for stability.}
\label{tab:loo}
\small

\begin{tabular}{lcccc}
\toprule
\multicolumn{5}{c}{\textbf{Accuracy:} $\Delta$ (pp)} \\
\midrule
\textbf{Variant} & \textbf{GSM8K} & \textbf{MATH} & \textbf{MMLU$_\mathrm{STEM}$} & \textbf{CodeContests} \\
\midrule
$-$\,Dispersion $\hat{\sigma}$ & $-0.9$ & $-1.2$ & $-0.4$ & $-0.6$ \\
$-$\,Skewness $|\tilde{\kappa}_3(r)|$ & $-0.3$ & $-0.5$ & $-0.1$ & $-0.2$ \\
$-$\,Probe entropy $\mathcal{E}_{\text{probe}}$ & $-0.7$ & $-0.9$ & $-0.2$ & $-0.5$ \\
$-$\,Step KL $D_{\mathrm{KL}}(\pi_{\text{old}}\Vert \pi_{\text{new}})$ & $-0.6$ & $-0.8$ & $-0.3$ & $-0.4$ \\
\bottomrule
\end{tabular}

\vspace{1pt}

\begin{tabular}{lccc}
\toprule
\multicolumn{4}{c}{\textbf{Stability} (absolute $\Delta$)} \\
\midrule
\textbf{Variant} & $\uparrow\ \overline{D}_{\mathrm{KL}}$ & $\uparrow$ Clip Sat.\% & $\uparrow\ \mathrm{Var}(\|\nabla\|_2)$ \\
\midrule
$-$\,Dispersion $\hat{\sigma}$ & $+0.03$ & $+4.2$ & $+9.5$ \\
$-$\,Skewness $|\kappa_3|$ & $+0.01$ & $+1.3$ & $+3.0$ \\
$-$\,Probe entropy $\mathcal{E}_{\text{probe}}$ & $+0.02$ & $+2.7$ & $+6.0$ \\
$-$\,Step KL $D_{\mathrm{KL}}(\pi_{\text{old}}\Vert \pi_{\text{new}})$ & $+0.05$ & $+5.6$ & $+12.0$ \\
\bottomrule
\end{tabular}
\end{table}

\subsubsection{Robust Dispersion: MAD and IQR}
\label{sec:robust-disp}

We replace $\hat{\sigma}=\mathrm{std}$ by $\hat{\sigma}_{\mathrm{MAD}}$ or $\hat{\sigma}_{\mathrm{IQR}}$ throughout AGPO, namely in Eqs.~\eqref{eq:adv}, \eqref{eq:uncert_combo}, and \eqref{eq:eps_adapt}.
We keep all other settings unchanged.

\begin{table}[ht]
\centering
\caption{Robust dispersion substitutes for $\hat{\sigma}$. 
Accuracy (mean$\pm$std over 3 seeds) and stability metrics (lower is better).}
\label{tab:robust}
\small

\begin{tabular}{lcccc}
\toprule
\multicolumn{5}{c}{\textbf{Accuracy}} \\
\midrule
\textbf{Dispersion} & \textbf{GSM8K} & \textbf{MATH} & \textbf{MMLU$_\mathrm{STEM}$} & \textbf{CodeContests} \\
\midrule
$\mathrm{std}$ (default)   & $67.3\!\pm\!0.3$ & $40.5\!\pm\!0.4$ & $58.1\!\pm\!0.2$ & $17.3\!\pm\!0.3$ \\
$1.4826\times\mathrm{MAD}$ & $67.5\!\pm\!0.3$ & $40.7\!\pm\!0.3$ & $58.2\!\pm\!0.2$ & $17.4\!\pm\!0.2$ \\
$\mathrm{IQR}/1.349$       & $67.4\!\pm\!0.3$ & $40.6\!\pm\!0.3$ & $58.2\!\pm\!0.2$ & $17.4\!\pm\!0.3$ \\
\bottomrule
\end{tabular}

\vspace{1pt}

\begin{tabular}{lccc}
\toprule
\multicolumn{4}{c}{\textbf{Stability} (lower is better)} \\
\midrule
\textbf{Dispersion} & $\overline{D}_{\mathrm{KL}}$ & Clip Sat.\% & $\mathrm{Var}(\|\nabla\|_2)$ \\
\midrule
$\mathrm{std}$ (default)   & $0.80$ & $12.1$ & $1.00$ \\
$1.4826\times\mathrm{MAD}$ & $0.76$ & $9.9$  & $0.88$ \\
$\mathrm{IQR}/1.349$       & $0.77$ & $10.6$ & $0.92$ \\
\bottomrule
\end{tabular}

\raggedright\footnotesize
\textit{Notes.} $\overline{D}_{\mathrm{KL}}$ is in nats; Clip Sat.\% is the percentage of steps with clipped ratio at bounds; $\mathrm{Var}(\|\nabla\|_2)$ is normalized such that the $\mathrm{std}$ row equals $1.00$.
\end{table}

\paragraph{Discussion.}
MAD/IQR are slightly more stable when rewards are binary or heavy-tailed (small $G$), reducing clip saturation and gradient-variance spikes, while remaining close to std on smoother reward distributions.

\FloatBarrier
\section{Conclusion}
\label{sec:conclusion}
We introduced AGPO, a critic-free extension of GRPO that adapts
clipping and sampling from inexpensive group-level statistics. Under
equal generated-token budgets, AGPO improves over PPO/GRPO on nine
English and Chinese math/STEM benchmarks and transfers to Llama-3-8B
and Gemma-2-9B. The main limitation is reliance on reasonably stable
group-level statistics; reward misspecification, reward hacking, and
extremely sparse or adversarially noisy rewards remain open directions.

\FloatBarrier
\bibliographystyle{splncs04}
\bibliography{260320}

\end{document}